\newcommand{\parfrac}[2]{\paran{\frac{#1}{#2}}}
\newcommand{\paran}[1]{\left( #1 \right)}
\newtheorem{definition}{Definition}
\newtheorem{theorem}{Theorem}
\title{On Primes, Log-Loss Scores and (No) Privacy}
\author{Abhinav Aggarwal \\
  Amazon Alexa\\
  Seattle, WA USA \\
  \texttt{\small aggabhin@amazon.com} \\\And
  Zekun Xu \\
  Amazon Alexa \\
  Seattle, WA USA \\
  \texttt{\small zeku@amazon.com} \\\And
  Oluwaseyi Feyisetan \\
  Amazon Alexa \\
  Seattle, WA USA\\
  \texttt{\small sey@amazon.com} \\\And
  Nathanael Teissier \\
  Amazon Alexa \\
  Arlington, VA USA\\
  \texttt{\small natteis@amazon.com}}
\date{}
\begin{document}
\maketitle
\begin{abstract}
Membership Inference Attacks exploit the vulnerabilities of exposing models trained on customer data to queries by an adversary. In a recently proposed implementation of an auditing tool for measuring privacy leakage from sensitive datasets, more refined aggregates like the Log-Loss scores are exposed for simulating inference attacks as well as to assess the total privacy leakage based on the adversary's predictions. In this paper, we prove that this additional information enables the adversary to infer the membership of any number of datapoints with full accuracy in a single query, causing complete membership privacy breach. Our approach obviates any attack model training or access to side knowledge with the adversary. Moreover, our algorithms are agnostic to the model under attack and hence, enable perfect membership inference even for models that do not memorize or overfit. In particular, our observations provide insight into the extent of information leakage from statistical aggregates and how they can be exploited. 
\end{abstract}
\section{Introduction}
Protecting customer privacy is of fundamental importance when training ML models on sensitive customer data. While explicit data de-identification and anonymization mechanisms can help protect privacy leakage to some extent, research has shown that this leakage can happen when models trained on customer data can be queried by an external entity~\cite{homer2008resolving,sankararaman2009genomic,li2013membership,shokri2017membership}, or when statistical aggregates on the dataset are exposed~\cite{dwork2010difficulties, dwork2017exposed}. 

Recently, it was shown that the knowledge of Log-Loss scores leaks information about true labels of test datapoints under some constraints on the prior knowledge on these labels~\cite{whitehill2018climbing}. However, extracting meaningful information from these aggregates on arbitrary large datasets, while maintaining reasonable inference accuracy in a limited number of queries to a Log-Loss oracle remained an open problem, specially in cases when no prior knowledge is available. Moreover, the number of queries required by their algorithm scales with the size of the test dataset. We address this problem in this paper and provide multiple algorithms for optimal inference of arbitrarily many test labels in a single query using the exposed Log-Loss scores. This sheds insight into the extent of information leakage from this statistical aggregate and how it can be exploited to game a classification task, for example, in the context of data-mining competitions like Kaggle, KDDCup and ILSVRC Challenge~\cite{russakovsky2015imagenet}.

More concretely, consider the following scenario: you are tasked with a critical binary classification problem. The quality of your solution will be assessed through a performance score (Log-Loss) on an unknown test dataset. If you score the highest among all candidate solutions, then you win a significant cash prize. You are allowed only two attempts at the solution and the best of the two scores will be considered. 

\textit{Is it possible to game this system in a way that your score is always the highest amongst all candidates, without even training any classifier?} 

We answer this in the affirmative by showing that the knowledge of only the size of the test dataset is enough to construct a scheme that can game any binary classifier that uses the Log-Loss metric to assess the quality of classification. This scheme is completely agnostic of the underlying classification task and hence, sheds light on how a malicious modeller can fake a perfect classifier by demonstrating zero test error. We assume that the oracle reports the scores truthfully on the entire dataset.  

A particularly interesting application of our observation is for breaching membership privacy, where an attacker can query the model for inference on a set of datapoints and use these responses to infer what datapoints were used to train that model. Given blackbox access to a model and a data point $x$, this attack model is a binary classifier to infer the membership of $x$ in the training dataset of the target model using its output on $x$ -- the more information this output reveals, the better this inference can be performed. Consequently, the accuracy of the attack depends on how well the adversary can capture the difference in model performance. 

Nonetheless, the popularity of this attack has made it a strong candidate for assessing privacy leakage of models trained on datasets containing sensitive information~\cite{song2019auditing,backes2016membership,pyrgelis2017knock,salem2018ml,liu2019socinf,murakonda2020ml}. A successful attack can compromise the privacy of the users that contribute to the training dataset. Our results show that an oracle access to Log-Loss scores (for example, when using open source privacy auditors on sensitive datasets~\cite{murakonda2020ml}) enables full privacy breach in a single query. 

\subsection{Related Work}
In recent work, \cite{blum2015ladder} demonstrated how an attacker can estimate the test set labels in a competition setting with probability $2/3$. Similarly, and more related to our work, \cite{whitehill2016exploiting,whitehill2018climbing} showed how the knowledge of AUC and Log-Loss scores can be used to make inference on similar test sets by issuing multiple queries for these statistics. Our work extends the latter to optimize the number of queries. Similarly, through a Monte Carlo algorithm,~\cite{matthews2013examination} show how knowing most of the test labels can help estimate the remaining labels upon gaining access to an empirical ROC curve. However, their algorithm is far from exact inference with no \emph{apriori} information of the true labels.

We further observe that the theme of our work is related to two fields of research: adaptive data analysis, and protections of statistical aggregates using Differential Privacy (DP). In adaptive data analysis \cite{hardt2014preventing,dwork2015preserving}, an attacker leverages multiple (adaptive) queries to sequentially construct a complete exploit (e.g., of a test set). Conversely, with DP \cite{dwork2006calibrating} the objective is to protect the aggregate statistics, such as those exploited by \cite{whitehill2016exploiting}, from leaking information. 

\subsection{Log-Loss Metric} 
We begin with reminding the reader of the definition of the Log-Loss metric on a given prediction vector with respect to a binary labeling of the datapoints in the test dataset~\cite{murphy2012machine}.

\begin{definition}[Log-Loss] 
    \label{def:logloss} For a dataset $D = [d_1,\dots,d_{|D|}]$, let $\ell \in \{0,1\}^{|D|}$ be a binary labeling and $\mathbf{x} = [x_1,\dots,x_{|D|}] \in [0,1]^{|D|}$ be a vector of prediction scores. Let $g(\ell_i,x_i) = \ell_i\log_e x_i + (1-\ell_i)\log_e (1-x_i)$. Then, the Log-Loss ($LL$ in short) for $\mathbf{x}$ with respect to $\ell$ is defined as $LL(\mathbf{x},\ell) = -\frac{1}{|D|}\sum_{i=1}^{|D|}g(\ell_i,x_i)$.
\end{definition}
The definition easily generalizes for multi-class classifiers. A common variant is to ignore the normalization by $|D|$. Our constructions in this paper are scale-invariant.

\section{Algorithms for Exact Inference using Log-Loss scores}\label{sec:algorithms}
In this section, we discuss multiple algorithms for single shot inference of all ground-truth labels using carefully constructed prediction vectors that help establish a 1-1 correspondence of the Log-Loss scores with the labelings of the test dataset. We, therefore, refer to the entity that performs such an inference as an \emph{adversary}.

\subsection{Inference using Twin Primes} Our first algorithm uses twin-primes, i.e. pairs of prime numbers within distance $2$ of each other (see OEIS A001359 from \url{https://oeis.org/A001359}.). It has been conjectured that infinitely many such pairs exist~\cite{de1851recherches,dunham2013note}. For a dataset of some finite size $|D| \geq 1$, we require $|D|$ such pairs. The main steps of our approach are outlined in Algorithm~\ref{alg:logloss} and the following theorem proves its correctness.

\begin{algorithm}[t]
Let $5 \leq p_1 < \dots < p_{|D|}$ be a sequence of (smallest) primes such that $p_i+2$ is also a prime for all $i$. Form the prediction vector for $D$ as $\mathbf{x} = \left[ \frac{p_1}{2+p_1},\dots, \frac{p_{|D|}}{2+p_{|D|}} \right]$. Obtain the Log-Loss on $\mathbf{x}$ and use that to infer the ground-truth labels for $D$ using Algorithm~\ref{alg:loglossreconstruction}.
\caption{Inference on dataset $D =
 [d_1,\dots,d_{|D|}]$ using Twin Primes}
\label{alg:logloss}
\end{algorithm}

\begin{theorem}
\label{thm:logloss} If the Twin-Prime Conjecture holds, then for any dataset $D$, the Log-Loss scores returned by Algorithm~\ref{alg:logloss} are in 1-1 correspondence with the binary labelings for datapoints in $D$.
\end{theorem}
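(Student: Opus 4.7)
The plan is to show that the map $\ell \mapsto LL(\mathbf{x},\ell)$ is injective on $\{0,1\}^{|D|}$, which is exactly the 1-1 correspondence claimed; the strategy is to reduce the equality of two Log-Loss values to an equality of two positive integers and then invoke unique prime factorization. The Twin-Prime Conjecture enters only to guarantee that the sequence $5 \le p_1 < \cdots < p_{|D|}$ of primes with $p_i+2$ also prime actually exists for every finite $|D|$; once this sequence is in hand, the rest of the argument is unconditional arithmetic.

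First I would substitute $x_i = p_i/(p_i+2)$ and $1-x_i = 2/(p_i+2)$ into Definition~\ref{def:logloss}, so that the per-index term $g(\ell_i,x_i)$ becomes $\ell_i(\log p_i - \log(p_i+2)) + (1-\ell_i)(\log 2 - \log(p_i+2))$, which simplifies to $\ell_i \log p_i + (1-\ell_i)\log 2 - \log(p_i+2)$. Summing over $i$, the term $-\sum_i \log(p_i+2)$ is constant in $\ell$, so two labelings $\ell,\ell'$ produce the same Log-Loss if and only if
\[
\sum_{i \in S} \log p_i + (|D|-|S|)\log 2 \;=\; \sum_{i \in S'} \log p_i + (|D|-|S'|)\log 2,
\]
where $S = \{i : \ell_i = 1\}$ and $S' = \{i : \ell'_i = 1\}$. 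Exponentiating both sides converts this to the integer identity
\[
2^{|D|-|S|} \prod_{i \in S} p_i \;=\; 2^{|D|-|S'|} \prod_{i \in S'} p_i.
\]

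Now I would invoke unique factorization in $\mathbb{Z}$: the primes $p_1,\dots,p_{|D|}$ are pairwise distinct and, being at least $5$, are all odd and therefore distinct from $2$. Matching the exponent of $2$ on both sides forces $|S|=|S'|$, and then matching the exponent of each individual odd prime $p_i$ forces $S = S'$, hence $\ell = \ell'$. This closes injectivity. The theorem then follows by restricting the codomain of $LL(\mathbf{x},\cdot)$ to its image. The only genuinely nontrivial step is the reduction itself: the specific choice of $x_i$ as the ratio of two distinct primes coprime to $2$ is precisely what turns an a priori transcendental equation among natural logarithms into a question about integer factorization, to which unique factorization is a blunt but complete tool; no further subtleties (e.g., linear independence of logarithms over $\mathbb{Q}$, which would otherwise be needed) are required.
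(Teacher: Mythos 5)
Your proof is correct and follows essentially the same route as the paper: reduce $LL(\mathbf{x},\ell)$ to the logarithm of a ratio whose numerator is $2^{|D|-|S|}\prod_{i\in S}p_i$, then invoke unique factorization of integers to conclude injectivity. Your version is marginally cleaner in that it cancels the constant denominator up front and handles the paper's two cases (differing versus equal numbers of zeros) in a single appeal to unique factorization, and you correctly identify that the Twin-Prime Conjecture is needed only for the existence of the prime sequence, not for the injectivity argument itself.
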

\begin{proof}
For a fixed $D$ and labeling $\ell$, it suffices to show that $-|D|\cdot LL(\mathbf{x},\ell)$ takes all unique values. From Definition~\ref{def:logloss}, observe that the following holds: $$-|D|\cdot LL(\mathbf{x},\ell) = \log_e \parfrac{2^{|D_0|}\prod_{d_j \in D_1}p_j}{(2+p_1)\cdots(2+p_{|D|})}.$$

Now, fix any two labelings $\ell_1$ and $\ell_2$ for $D$. If the number of zeros in them are different, then it is easy to see that $\mathbf{x}$ will give different Log-Loss scores on both of them, since the exponent of $2$ in the numerator will be different for these two labelings and all other prime numbers being odd in the denominator, no common factors will exist to cancel this effect. If the number of zeros is the same, then observe that: $$|D| \paran{LL(\mathbf{x},\ell_1)-LL(\mathbf{x},\ell_2)} = \log_e \frac{\prod_{d_i \in D_1^{(2)}}p_i}{\prod_{d_j \in D_1^{(1)}}p_j},$$where $D_1^{(2)}$ is the set of datapoints with label $1$ in $\ell_2$ (similarly for $D_1^{(1)}$). 
Now, since $\ell_1$ and $\ell_2$ are different, there must exist some index $1 \leq k \leq |D|$ for which $\ell_1(k) = 0$ and $\ell_2(k) = 1$. Thus, $p_k$ will appear in the numerator but not in the denominator. Moreover, since the denominator is also a product of primes, it does not divide the numerator in this case, and hence, the difference $LL(\mathbf{x},\ell_1)-LL(\mathbf{x},\ell_2)$ is non-zero.
\end{proof}

As an example of this technique, assume $|D| = 2$. The primes we can use for this construction are $5$ and $11$, so that the prediction vectors can be set as $v_1 = \left[ 5/7, 2/7\right]$ and $v_2 = \left[ 11/13, 2/13\right]$. 
Then, the following lists the log-loss values for the prediction vector $\mathbf{x}^* = [5/7, 11/13]$ (as per Algorithm~\ref{alg:logloss}):
\begin{align*}
    -2LL(\mathbf{x}^*, [0,0]) &= \log_e \frac{2}{7} + \log_e \frac{2}{13} = \log_e \frac{4}{91}\\
    -2LL(\mathbf{x}^*, [0,1]) &= \log_e \frac{2}{7} + \log_e \frac{11}{13} = \log_e \frac{22}{91}\\
    -2LL(\mathbf{x}^*, [1,0]) &= \log_e \frac{5}{7} + \log_e \frac{2}{13} = \log_e \frac{10}{91}\\
    -2LL(\mathbf{x}^*, [1,1]) &= \log_e \frac{5}{7} + \log_e \frac{11}{13} = \log_e \frac{55}{91}
\end{align*}

\begin{algorithm}[t]
\SetAlgoLined
 \textbf{Input: }Log-Loss score $s$\\
 \textbf{Output: }True Labels for datapoints in $D$\\
 Let $e^{s|D|} = p/q$ (lowest form) and $q = 2^m p_1\cdots p_k$. Find the set of (zero-indexed) locations $I$ of primes $p_1\dots p_k$ in OEIS A001359. Construct the labeling $\ell$ as follows: Insert $1$ in indices specified by $I$, and $0$s elsewhere. Return $\ell$.
 \caption{True Labels from Log-Loss}
 \label{alg:loglossreconstruction}
\end{algorithm}

Our construction allows us to give an algorithm to determine the true labeling from the Log-Loss value, without having to consult a lookup table (see Algorithm~\ref{alg:loglossreconstruction}). This follows from Gauss's Fundamental Theorem of Arithmetic (GFoA), that every positive integer is either a prime or is uniquely factorizable as a product of primes~\cite{gauss1966disquisitiones}. We assume that the Log-Loss score $s$ is reported such that $e^{s|D|} = p/q$ is a rational number in its reduced form (i.e. with $q \ne 0$ and $\gcd(p,q)=1$), and that, without loss of generality, the prediction vector was constructed using the first $|D|$ prime numbers, as specified in Algorithm~\ref{alg:logloss}. 

As an example, suppose on a dataset of size $3$, the Log-Loss $s$ is reported such that $e^{3s} = 1729/170$. Note that this requirement of knowing $|D|$ is not necessary, since it is equal to the number of factors of the numerator of $e^{s|D|}$. Now, writing the denominator $170 = 2^1 \times 5 \times 17$, we note that there is $1$ zero in the labeling, and the other two labels are one. From OEIS A001359, we note that $5$ and $17$ are the first and third prime numbers in the series (when we start counting from $5$), respectively, and hence, the first and third datapoints must have labels one. Thus, we have inferred that the true labeling for $D$ must be $[1,0,1]$. 

We acknowledge that the assumption of knowing $e^{s|D|}$ in its reduced fraction form is equivalent to assuming knowledge of $s$ with infinite precision. We defer this investigation to Section~\ref{sec:floating}. 

\subsection{Extension to Multiple Classes} A similar construction can be used to infer all true labels in a multi-class setting as well. For the One-vs-All approach, then it is trivial to see that the individual Log-Loss scores for each class reveal datapoints from that class. For the $K$-ary classifier approach ($K$ being the number of classes), the following construction works: Let $p_1,\dots,p_{|D|}$ be the first $|D|$ primes. For datapoint $d_i$, use the following prediction vector: $$v_i = \left[ 1/\alpha_i,p_i/\alpha_i,\dots,p_i^{K-1}/\alpha_i \right],$$where $\alpha_i = \sum_{j=0}^{K-1}p_i^j$, thus, forming the prediction matrix $v_D = [v_1,\dots,v_{|D|}]$. Given the true labels $\ell \in \{1,\dots,K\}^{|D|}$, it can be shown that the following holds: $$-LL(v_D,\ell) + \sum_{j=1}^K\log_e \alpha_j = \log_e p_1^{\ell_1-1}\cdots p_{|D|}^{\ell_{|D|}-1}.$$This gives the required injection, since the sum on the left is constant for fixed $K$ and $|D|$, and the product on the right is unique (following GFoA).

\subsection{Inference using Binary Representations}
In Algorithm~\ref{alg:logloss}, the main reason why we chose distinct primes was that when the denominator of $e^{s|D|}$ was factorized, the prime factors would uniquely define the locations of 1s in the binary labeling.  The same 1-1 correspondence can be achieved by observing that the each binary labeling is also equivalent to a binary representation (base 2) of a natural number (see Algorithm~\ref{alg:logloss_noTwins}). By using powers of $2$ for only the indices corresponding to locations of $1$s in the binary labeling, when the denominator is now factorized, it produces in the exponent of the $2$ an integer, whose binary representation (when reversed) is exactly the same as the labeling. This also helps eliminate the dependence on the Twin Prime Conjecture. The following theorem formally establishes this proof.

\begin{theorem}
\label{thm:logloss_noTwins}
For any dataset $D$, the Log-Loss scores returned by Algorithm~\ref{alg:logloss_noTwins} are in 1-1 correspondence with the labelings for datapoints in $D$.
\end{theorem}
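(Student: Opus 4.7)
The plan is to mirror the proof of Theorem~\ref{thm:logloss}, only replacing the prime-factorization encoding by a positional base-$2$ encoding. The most natural choice of prediction vector consistent with the discussion preceding the theorem is $x_i = 2^{2^{i-1}}/\bigl(1+2^{2^{i-1}}\bigr)$, so that $1-x_i = 1/\bigl(1+2^{2^{i-1}}\bigr)$. Under this choice, coordinate $i$ contributes either $2^{i-1}\log_e 2 - \log_e(1+2^{2^{i-1}})$ (when $\ell_i=1$) or $-\log_e(1+2^{2^{i-1}})$ (when $\ell_i=0$) to $-|D|\cdot LL(\mathbf{x},\ell)$.

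Summing these contributions across $i$, I would obtain
\[
-|D|\cdot LL(\mathbf{x},\ell) \;=\; \log_e \frac{2^{N(\ell)}}{\prod_{i=1}^{|D|}\bigl(1+2^{2^{i-1}}\bigr)}, \qquad N(\ell):=\sum_{i=1}^{|D|}\ell_i\,2^{i-1}.
\]
The denominator depends only on $|D|$, not on $\ell$, so two labelings $\ell,\ell'$ induce the same Log-Loss value if and only if $N(\ell)=N(\ell')$. The crux of the argument then reduces to the injectivity of the bit-weight map $\ell\mapsto N(\ell)$ on $\{0,1\}^{|D|}$, which is precisely the uniqueness of binary (base-$2$) representations for integers in $\{0,1,\dots,2^{|D|}-1\}$. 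This is the same flavor of uniqueness-of-factorization appeal that powered Theorem~\ref{thm:logloss}, but now invoked at the level of the positional numeral system rather than at the level of primes, which is what lets us dispense with the Twin-Prime Conjecture.

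Inversion is then immediate and corresponds to the reconstruction step: reading off the Log-Loss score $s$, compute $e^{-|D| s}\cdot\prod_i (1+2^{2^{i-1}})$, which by construction equals $2^{N(\ell)}$, and output the bits of $N(\ell)$ as the labeling (reversed, since $\ell_i$ weights $2^{i-1}$ rather than $2^{|D|-i}$). I do not anticipate a genuine obstacle: the only subtlety is ensuring that the doubly-exponential choice $2^{i-1}$ of exponents guarantees the bit positions do not overlap, which is exactly what makes binary representations unique; no cancellation between numerator and denominator is possible because the numerator is a pure power of $2$ and the denominator is a fixed constant that is divided out.
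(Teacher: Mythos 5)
Your proposal is correct and follows essentially the same route as the paper's proof: compute the closed form of $-|D|\cdot LL(\mathbf{x},\ell)$, observe that the product $\prod_i\bigl(1+2^{2^{i-1}}\bigr)$ is a label-independent constant, and reduce injectivity to the uniqueness of binary representations of $N(\ell)=\sum_i \ell_i 2^{i-1}$. If anything, your observation that the denominator is a fixed constant divided out is cleaner than the paper's extra remark about the numerator and denominator sharing no common factors, but the argument is the same.
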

\begin{proof}

\begin{algorithm}[t]
Form the prediction vector for $D=
 [d_1,\dots,d_{|D|}]$ as $\mathbf{x} = \left[ \frac{\alpha_1}{1+\alpha_1},\dots,\frac{\alpha_{|D|}}{1+\alpha_{|D|}} \right]$, where $\alpha_i = 2^{2^{i-1}}$. Obtain the Log-Loss on $\mathbf{x}$ and use that to infer the true labels for $D$.
\caption{Exact Inference using Binary Representations}
\label{alg:logloss_noTwins}
\end{algorithm}

Similar to the proof of Theorem~\ref{thm:logloss}, for a fixed $D$ and labeling $\ell$, it suffices to show that $d = |D|\cdot LL(\mathbf{x},\ell)$ takes all unique values. Let $I_1$ be the set of indices in $\ell$ that have value $1$. Now, since $x_i = \frac{2^{2^{i-1}}}{1 + 2^{2^{i-1}}}$, we can write the following: $$d = \sum_{j=1}^{|D|}\log_e\paran{1+2^{2^{j-1}}} - \frac{\sum_{i \in I_1} 2^{i-1}}{\log_2 e}.$$ Thus, if $LL(\mathbf{x},\ell_1) = LL(\mathbf{x},\ell_2)$ for two distinct labelings $\ell_1$ and $\ell_2$, then from above, it is easy to see that this can only happen when $\sum_{i \in I_1^{(1)}} 2^{i-1} = \sum_{i \in I_1^{(2)}} 2^{i-1}$, where $I_1^{(j)}$ is the index set (similar to $I_1$) for labeling $\ell_j$. Now, since every positive integer has a unique binary representation, this implies that $I_1^{(1)} = I_1^{(2)}$, which can only happen when the two labelings are the same. Moreover, note that since powers of $2$ are always even, the product in the denominator of the equation above has no common factors with the numerator. Thus, each binary labeling of $\mathbf{x}$ gives a unique Log-Loss score.
\end{proof}

As an example, if the true labels for a dataset $D$ containing four datapoints are $[1,0,1,1]$, the exponent of $2$ can be the natural number represented using the binary representation $1101$, which is 13. Similarly, if the exponent observed is, say 18, then the corresponding binary representation is $10010$, and hence, the true labels must be $[0,1,0,0,1]$. 

\section{Adapting to Fixed Precision Arithmetic}\label{sec:floating}
Can we design prediction vectors such that the Log-Loss scores are atleast some $\Delta$ apart from each other, where $\Delta$ is limited by the floating point precision on the machine used to simulate our inference algorithms?

For distinguishing scores with $\phi$ significant digits, since there are a total of $10^\phi$ possible numeric values, the threshold value of separation is $\Delta \ge 10^{-\phi}$. If the separation in the scores is smaller than this value, then they cannot be distinguished. Inverting this inequality gives $\phi \ge \left \lceil \log_{10} \parfrac{1}{\Delta} \right \rceil.$ For example, if one wishes to have the scores separated by $\Delta \ge 0.2$, then the minimum amount of precision required is $\left \lceil \log_{10} 5 \right \rceil = 1$. For $\Delta = 0.002$, we would need $\phi \ge \left \lceil \log_{10} 500 \right \rceil = 3$ digits.

We can reduce the requirement of a large precision by combining the AUC and Log-Loss scores, which is common in most practical situations where multiple performance metrics are evaluated to give a holistic overview of classifier inference. This way, even if they are individually not-unique but the tuple is unique for each labeling, exact inference can be done. For example, consider a dataset $D = [d_1,d_2,d_3]$ and the prediction vector $v = [0.2, 0.4, 0.6]$. Clearly, neither the AUC scores nor the Log-Loss scores are unique. However, if we consider the two scores together, the labels can be uniquely identified. Moreover, precision of only two significant digits is enough to make this decision.

A rough analysis tells us that with $\phi$ significant digits, there are $10^\phi(10^\phi+1)$ possible unique values in the AUC-Log-Loss tuple (the +1 is to take into account the case when AUC is Not-Defined). Using the pigeonhole principle, for any dataset $D$ with $|D|=n$, a necessary condition for unique inference is that $2^n \leq 10^\phi(10^\phi+1)$, which gives $\phi \gtrsim \left \lceil 0.151n \right \rceil$. Conversely, with a precision of $\phi$ significant digits, one can only hope to uniquely identify labels for datasets of size at most $\left \lfloor \log_2 \paran{10^\phi(10^\phi+1)} \right \rfloor \leq 7\phi$. 

We can recurse over the remaining points in the database in this situation, for exact inference in at most $\lceil n/6\phi \rceil$. For example, using the IEEE 754 double-precision binary floating-point format, which has at least 15 digits precision, at most $\left\lceil \frac{|D|}{90} \right\rceil$ queries suffice.

\section{Exact Membership Inference Attacks using a Log-Loss Oracle} \label{sec:mia}
Our observations provide insight into the extent of information leakage from statistical aggregates and how they can be exploited. A particularly interesting application is designing stronger Membership Inference Attacks. These attacks were first proposed to exploit the vulnerabilities of exposing models trained on customer data to queries by an adversary~\cite{shokri2017membership}. 

In a recently open sourced implementation of an auditing tool for measuring privacy leakage from sensitive datasets~\cite{murakonda2020ml}, more refined aggregates like the Area Under the ROC Curve (AUC) and Log-Loss scores are exposed for simulating inference attacks as well as to assess the total privacy leakage based on the adversary's predictions. In this threat model, our algorithms demonstrate that this additional information enables the adversary to improve its inference accuracy and learn potentially sensitive information about the distribution of data inside sensitive datasets. The response to this query helps infer exactly which datapoints were used for training.

There are multiple observations that one can make about the Log-Loss based attack. First, the adversary never queries the model under attack directly for prediction on any datapoints whatsoever. This makes intuitive sense since the model does not decide what specific data goes into its training. Rather, it is the other way round. Second, the interaction with the model curator is similar to the interaction with the model interface in the attack proposed by~\cite{shokri2017membership} in that the adversary seeks answers to queries that can help leak information about the training data. The only difference is the additional access to a Log-Loss oracle, which helps make our attack purely deterministic.

\section{Conclusion and Future Work}\label{sec:conclusion}
In this paper, we demonstrated how a single Log-Loss query can enable exact inference of ground-truth labels of any number of test datapoints. This sheds light on how sensitive accuracy metrics can be, even when they are computed on arbitrary large datasets and do not intuitively seem to leak any information. 

An interesting question to ask is if other popular metrics (like precision, recall, AUC) used in the ML literature can be exploited for privacy leakage in a similar manner. In~\cite{whitehill2019does}, an AUC-ROC oracle on the test dataset is used to deduce the true labels in at most $2|D|$ queries. This opens up opportunities to explore if exact inference on all datapoints is possible with one AUC query.

Yet another interesting question to ask is if exact inference is possible when the adversary learns only a bound on or an approximate value in each Log-Loss query it issues. Observe that by deciding the size of the test dataset, the adversary also fixes the number of possible values Log-Loss scores can take. In a typical scenario where the adversary has some prior knowledge about the amount by which the reported score differs from the actual value (see~\cite{dwork2019differential} for an approach to add noise to the reported scores), this discrete set of possible scores can present a huge advantage -- the adversary can perform inference over the most likely score under the constraint above. Nonetheless, a distribution over the labelings for the test set can be learnt to bound the inference error.

\balance
\bibliographystyle{acl_natbib}
\bibliography{emnlp2020}

\end{document}